%% file: main.tex
\documentclass[nonacm, screen, authorversion]{jair}
\input{header}

\setcopyright{cc}
\copyrightyear{2026}
\acmDOI{XXXXXX}

\acmVolume{X}
\acmArticle{1}
\acmMonth{1}
\acmYear{2026}

\RequirePackage[
  datamodel=acmdatamodel,
  style=acmauthoryear,
  backend=biber,
  giveninits=true,
  uniquename=init
  ]{biblatex}

\begin{document}

\title{Reactive Knowledge Representation and Asynchronous Reasoning}

\author{Simon Kohaut}
\orcid{0000-0002-0855-6316}
\affiliation{%
  \institution{Artificial Intelligence and Machine Learning Group, TU Darmstadt}
  \city{Darmstadt, Hesse}
  \country{Germany}
}
\additionalaffiliation{%
  \institution{Konrad Zuse School of Excellence in Learning and Intelligent Systems (ELIZA)}
  \city{Darmstadt, Hesse}
  \country{Germany}
}
\email{simon.kohaut@cs.tu-darmstadt.de}
\author{Benedict Flade}
\orcid{0000-0002-0636-955X}
\affiliation{%
  \institution{Honda Research Institute EU}
  \city{Offenbach am Main, Hesse}
  \country{Germany}
}
\email{benedict.flade@honda-ri.de}
\author{Julian Eggert}
\orcid{0000-0003-4437-6133}
\affiliation{%
  \institution{Honda Research Institute EU}
  \city{Offenbach am Main, Hesse}
  \country{Germany}
}
\email{julian.eggert@honda-ri.de}
\author{Kristian Kersting}
\orcid{000-0002-2873-9152}
\affiliation{%
  \institution{Artificial Intelligence and Machine Learning Group, TU Darmstadt}
  \city{Darmstadt, Hesse}
  \country{Germany}
}
\additionalaffiliation{%
  \institution{Hessian Center for Artificial Intelligence (hessian.AI)}
  \city{Darmstadt, Hesse}
  \country{Germany}
}
\additionalaffiliation{%
  \institution{Centre for Cognitive Science, TU Darmstadt}
  \city{Darmstadt, Hesse}
  \country{Germany}
}
\additionalaffiliation{%
  \institution{German Center for Artificial Intelligence (DFKI)}
  \city{Darmstadt, Hesse}
  \country{Germany}
}
\email{kristian.kersting@cs.tu-darmstadt.de}
\author{Devendra Singh Dhami}
\orcid{0000-0002-4331-7193}
\affiliation{%
  \institution{Uncertainty in Artificial Intelligence Group, TU Eindhoven}
  \city{Eindhoven, North Brabant}
  \country{Netherlands}
}
\email{d.s.dhami@tue.nl}

\renewcommand{\shortauthors}{Kohaut, Flade, Eggert, Kersting \& Dhami}

\begin{abstract}
Exact inference in complex probabilistic models often incurs prohibitive computational costs.
This challenge is particularly acute for autonomous agents in dynamic environments that require frequent, real-time belief updates.
%
Existing methods are often inefficient for ongoing reasoning, as they re-evaluate the entire model upon any change, failing to exploit that real-world information streams have heterogeneous update rates.
To address this, we approach the problem from a reactive, asynchronous, probabilistic reasoning perspective.
%
We first introduce Resin (Reactive Signal Inference), a probabilistic programming language that merges probabilistic logic with reactive programming. 
Furthermore, to provide efficient and exact semantics for Resin, we propose Reactive Circuits (RCs). 
Formulated as a meta-structure over Algebraic Circuits and asynchronous data streams, RCs are time-dynamic Directed Acyclic Graphs that autonomously adapt themselves based on the volatility of input signals.
%
In high-fidelity drone swarm simulations, our approach achieves several orders of magnitude of speedup over frequency-agnostic inference. 
We demonstrate that RCs' structural adaptations successfully capture environmental dynamics, significantly reducing latency and facilitating reactive real-time reasoning.
%
By partitioning computations based on the estimated Frequency of Change in the asynchronous inputs, large inference tasks can be decomposed into individually memoized sub-problems. 
This ensures that only the specific components of a model affected by new information are re-evaluated, drastically reducing redundant computation in streaming contexts.
\end{abstract}

\received{DD MM 2026}
\received[accepted]{DD MM 2026}

\maketitle

\clearpage
\section{Introduction}
\label{sec:introduction}
Autonomous systems, ranging from self-driving cars to robotic assistants, operate in dynamic environments where safety and reliability depend on the ability to continually reason about changing states.
Although probabilistic reasoning methods offer the necessary formal guarantees for such safety-critical applications~\cite{DBLP:books/daglib/pearl1988probabilistic,DBLP:conf/ijcai/YangIL20,DBLP:journals/jair/SkryaginODK23}, they are predominantly designed for one-shot inference.
In real-time settings, where beliefs must be continuously updated against asynchronous streams of information, these methods often incur prohibitive computational costs for every input change, regardless of its scope.

While specialized models for temporal data, such as Dynamic Bayesian Networks~\cite{murphy2002dynamic}, exist, they typically assume synchronous updates at fixed time steps and are not designed to exploit the heterogeneous, asynchronous nature of data streams found in many real-world systems.
Instead, our work builds on the key insight that information in real-world scenarios rarely changes uniformly.
For instance, an autonomous vehicle's position changes continuously, whereas traffic lights change intermittently, and road network data remains static for hours or longer.
Existing tractable models, such as Probabilistic Circuits~\cite{choi2020probabilistic}, typically fail to exploit this temporal structure, redundantly re-evaluating stable parts of the model alongside volatile ones.

To address this inefficiency, we propose a novel approach that dynamically adapts the inference process to the rate of information change, treating the data channels between soft- and hardware components as first-class citizens.
We introduce \textbf{Reactive Circuits (RCs)}, a time-dynamic inference framework built on top of Algebraic Circuits~\cite{DBLP:journals/japll/KimmigBR17} that serves as an efficient, exact, and asynchronous inference engine.
RCs automatically track the Frequency of Change (FoC) of their input signals and partition their computational structure accordingly.
By grouping inputs by volatility and memoizing intermediate results, RCs eliminate redundant calculations, re-evaluating only the model components affected by new information.
This dynamic restructuring endows the circuits with computational \textit{plasticity}, allowing them to adapt their complexity to mirror the environment's dynamics.
The resulting reactive evaluation can yield speedups of several orders of magnitude, enabling continuous exact inference in time-critical applications.

As a high-level, declarative modeling approach of such systems, we present \textbf{Resin (Reactive Signal Inference)}.
Resin is a first asynchronous probabilistic programming language that bridges the gap between the expressivity of Answer Set Programming (ASP) and the practical requirements of robotics, such as handling asynchronous data streams from sensors or deep learning models.
Resin allows users to declaratively specify source signals for exchanging information with separate processes as well as logical target signals, which are then compiled into RCs for efficient, continual evaluation.
In summary, our contributions are two-fold: 
\begin{description}
    \item[\textbf{Resin}] A high-level probabilistic logic language designed for modeling continual inference tasks in distributed systems. 
    It bridges the gap between the expressivity of Answer Set Programming and the requirements of reactive systems, allowing for the joint specification of asynchronous data flows and logic constraints.
    \item[\textbf{Reactive Circuits}] A novel, adaptive inference structure for asynchronous reasoning that generalizes over any commutative semiring. 
    By dynamically restructuring themselves based on signal frequency, RCs minimize redundant operations through strategic memoization, enabling real-time exact inference.
\end{description}
Together, these components enable the development of reactive, trustworthy autonomous agents.
Complementary to these contributions, we provide an open source implementation available at~\href{https://github.com/simon-kohaut/resin}{github.com/simon-kohaut/resin}.

The remainder of this paper is structured as follows.
Sec.~\ref{sec:resin} formalizes Resin, detailing its syntax for probabilistic modeling, its data model for handling asynchronous signals, and the compilation process into RCs.
Sec.~\ref{sec:reactive_circuits} introduces Reactive Circuits, establishing the algebraic foundations and defining the operational semantics for frequency-based adaptation and reactive evaluation.
We then present our experimental evaluation in Sec.~\ref{sec:experiments}, where we investigate tracking accuracy, circuit plasticity, and performance gains in both synthetic benchmarks and high-fidelity drone simulations.
Finally, Sec.~\ref{sec:related_work} discusses related work in neuro-symbolic AI and tractable inference before concluding with a summary and outlook in Sec.~\ref{sec:conclusion}.

\input{figures/architecture}

\section{The Resin Programming Language}
\label{sec:resin}

Probabilistic logic is an excellent tool for modeling constrained decisions and motion planning.
To this end, Resin draws inspiration from works such as ProbLog~\cite{DBLP:conf/ijcai/RaedtKT07,DBLP:conf/nips/ManhaeveDKDR18}, NeurASP~\cite{DBLP:conf/ijcai/YangIL20}, and SLASH~\cite{DBLP:conf/kr/SkryaginSODK22}, which blend probabilistic inference with first-order logic and integrate ASP, respectively. 
Resin extends these concepts to integrate inter-process communication with probabilistic first-order logic, targeting settings where asynchronously updated parameters in an uncertain domain knowledge are critical for ensuring ongoing safety constraints.

While Sec.~\ref{sec:reactive_circuits} discusses RCs as an online, self-adapting reasoning framework that provides efficient semantics for Resin, here we discuss the Resin programming language itself as a high-level reactive knowledge representation.
Hence, the following discusses the structure and internals of Resin programs, their data model and data exchange via Data Distribution Service (DDS) channels~\cite{DBLP:conf/icdcsw/Pardo-Castellote03}, and how we obtain initial RCs from a Resin program and target.

\subsection{Asynchronous Probabilistic Logic}
Resin couples inter-process communication and first-order logic into a single framework suitable for modeling continual inference processes and their embedding into networks of (neural) sensors and arbitrary computational models.
Asynchronous data channels carry Resin's parameter updates and inference results during runtime, i.e., continually distributing input and output data.
\begin{definition}
    A \textbf{signal} $s$ is defined by the tuple $(a, c, T)$, where $a$ is an atom, $c$ is a communication channel identifier, and $T$ is a data type.
    At time $t \in \mathbb{R}_{\geq 0}$, $s$ is associated with a value $s(t) \in \mathbb{R}$ which updates at a rate $\lambda_s(t)$.
    That is, we assume a non-stationary Poisson process dictating the arrival of updates to $s(t)$.
\end{definition}%
One can further distinguish signals according to their direction within a network of data distributors.
Signals are designated as \textbf{sources} or \textbf{targets} of a Resin program.
While source signals capture incoming data with unknown temporal behavior, target signals are computed using RCs and shared across the network.

Let us now take a look at how to build an entire Resin program.
\begin{definition}
    A \textbf{Resin program} is a first-order theory $\mathcal{T}$ that exchanges data via source signals $s_n \in \mathcal{S}$, $|S| = N$ and target signals $d_m \in \mathcal{D}$, $|D| = M$.
    Hereby, theory $\mathcal{T}$ is defined over a set $\mathcal{K} = \mathcal{S} \cup \mathcal{D} \cup \mathcal{A}$ of atoms.
    In addition to the signals, $\mathcal{A}$ captures background knowledge, connecting sources to targets.
\end{definition}%
The syntax of a Resin program is provided in Backus-Naur form in Appendix~\ref{app:grammar}, showing how sources, targets, and first-order logic concepts are formalized.
Fig.~\ref{fig:architecture} provides an overview of the Asynchronous Reasoning architecture enabled by Resin in tandem with RCs, illustrating the process of translating a Resin program into RCs, which in turn provide continual updates to the target signals.

\subsection{Data Model}
Resin's data model is designed to bridge the gap between continuous, asynchronous data streams and discrete probabilistic logic.
Signals in Resin are typed, ensuring that data from external processes is correctly interpreted within the logical framework.
We distinguish four core data types:
\begin{itemize}
    \item \texttt{Boolean}: Represents binary states ($\top, \bot$), typically derived from logical sensors or switches.
    \item \texttt{Number}: Represents continuous values $x \in \mathbb{R}$, such as sensor readings (e.g., velocity, altitude).
    \item \texttt{Probability}: Represents a degree of belief $P(x) \in [0, 1]$, often the output of an upstream classifier.
    \item \texttt{Density}: Represents a probability density function $p(x)$ over a continuous variable, capturing uncertainty in measurements like location or object distance.
\end{itemize}

While Resin's syntax is grounded in untyped first-order logic, these types dictate how signal values are coerced into the weights of the underlying reasoning task.
Depending on the reasoning task at hand, different semirings and coercions may be employed.
For example, in the context of Weighted Model Counting (WMC), where the semiring of interest is $([0, 1], +, \times, 0, 1)$, this coercion is handled as follows:
\begin{description}
    \item[\textbf{Direct Mapping}] Signals of type \texttt{Probability} are directly assigned to the corresponding atom.
    \item[\textbf{Determinism}] \texttt{Boolean} signals are assigned a probability of $1.0$ if true and $0.0$ otherwise.
    \item[\textbf{Integration}] Comparison operators are used to bridge the gap between continuous values and discrete logic. 
    For \texttt{Number} signals, comparisons (e.g., \texttt{speed < 20.0}) are treated analogously to Booleans. 
    Similarly, a \texttt{Density} involved in a comparison (e.g., \texttt{clearance > 10.0}) is coerced into a probability via its Cumulative Distribution Function (CDF).
\end{description}

This type system not only facilitates the integration of heterogeneous data sources but also plays a role in Resin's reactive runtime.
That is, the chosen domain of the source signals has implications on the volatility of the signals' runtime behaviour (and by extension, the structure of the Reactive Circuits).
For instance, while a \texttt{Density} signal representing a drone's position may change continuously, a derived \texttt{Probability} of it being in an unsafe region may remain constant for long periods (see Sec.~\ref{sec:experiments}).

\subsection{Compiling Resin}
Resin programs are translated into Answer Set Programs (ASP).
Specifically, sources are mapped to choices and rules are syntactically translated (e.g., substituting an \textit{and} in Resin with a comma in ASP).
For each target signal, a constrained ASP is generated that ensures the respective target holds in all answer sets.
We employ the off-the-shelf ASP solver Clingo~\cite{DBLP:journals/tplp/GebserKKS19} to exhaustively search for the stable models of these programs.

Let $\mathcal{J}_m$ be the set of stable models yielded by the ASP solver given the Resin program's theory $\mathcal{T}$ constrained on the respective ground atom of target signal $d_m \in \mathcal{D}$.
The probability of a stable model is computed as the product of the probabilities of the probabilistic facts as they appear in the model~\cite{DBLP:conf/kr/SkryaginSODK22}.
Consequently, the probability of a target signal is obtained as the sum of the probabilities of its stable models.
\begin{align}
   P(d_m) = \sum\nolimits_{j \in \mathcal{J}_m} \prod\nolimits_{s \in \mathcal{S}} P(s = j(s))
    \label{eq:wmc}
\end{align}
This problem, known as Weighted Model Counting (WMC), is traditionally subject to additional knowledge compilation steps.
That is, although the formula is correct, one can often represent the problem in much smaller forms than Eq.~\ref{eq:wmc}.
Going forward, we will assume this unrefined formulation of WMC without loss of generality when discussing the semantics of Resin in the form of Reactive Circuits and their online self-adjustments.

For each target $d_m$, the respective formula is provided to the RC-based online inference framework as laid out in Sec.~\ref{sec:reactive_circuits}.
More specifically, each source and target signal is connected to a DDS framework to exchange data with the compiled and adapted RCs.

We have focused on probabilistic inference because it highlights Resin's utility for continual safety and legal checks in robotics tasks.
Of course, parameter learning and other inference classes are also relevant, e.g., as supported by languages such as ProbLog~\cite{DBLP:journals/tplp/FierensBRSGTJR15}, DeepProbLog~\cite{DBLP:conf/nips/ManhaeveDKDR18}, or SLASH~\cite{DBLP:conf/kr/SkryaginSODK22}.
For this reason, we demonstrate in Sec.~\ref{sec:reactive_circuits} how RCs generalize to any commutative semiring, i.e., how they support tackling other tasks, such as gradients or most probable explanations.

\section{Reactive Circuits}
\label{sec:reactive_circuits}

Our contribution of asynchronous reasoning is built on the properties of the algebraic structures that underlie common reasoning tasks, i.e., sum-product type problems such as WMC.
Hence, we begin this section by defining the abstract algebra needed to support asynchronous reasoning with Reactive Circuits.
First, as a basic building block, let us consider the monoid.
\begin{definition}
    A \textbf{monoid} is an algebraic structure $(R, \circ, e^\circ)$ consisting of a set $R$ and the binary, associative operator $\circ$ with neutral element $e^\circ$.
    For $a, b, c \in R$, the equations $a \circ (b \circ c) = (a \circ b) \circ c$ and $a \circ e^\circ = e^\circ \circ a = a$ hold.
    Further, a commutative monoid supports $a \circ b = b \circ a$.
\end{definition}%
While monoids are insufficient for solving tasks such as WMC, which require two binary operators, we can utilize them in the following definition.
\begin{definition}
    A \textbf{commutative semiring} is an algebraic structure $(R, \oplus, \otimes, e^\oplus, e^\otimes)$ such that $(R, \oplus, e^\oplus)$ and $(R, \otimes, e^\otimes)$ each are commutative monoids.
    Additionally, they require that $\otimes$ left and right distributes over $\oplus$, i.e., for $a, b, c \in R$, it holds that $a \otimes (b \oplus c) = (a \otimes b) \oplus (a \otimes c) = (b \oplus c) \otimes a$.
\end{definition}%
As an example, with $R = [0, 1]$, classic addition and multiplication, an algebraic circuit can express Equation~\ref{eq:wmc} and thereby facilitate exact inference of a Resin \texttt{Probability} target.
Finally, as our further discussions will center around computation trees, we must consider the algebraic circuit as a graphical representation of arbitrary expressions within the underlying algebraic structure.
\begin{definition}
    An \textbf{Algebraic Circuit} is a Directed Acyclic Graph (DAG) over elements and operators of a commutative semiring.
    The circuit's nodes are either gates, i.e., $n$-ary applications of $\oplus$ or $\otimes$ on their respective child nodes, or leaves, which take on values from the underlying structure's domain $R$.
\end{definition}%
We now define the operational semantics of asynchronous reasoning.
To this end, we introduce a novel, adaptive inference structure we call Reactive Circuits (RC).
Since we aim to continually update inference results in large problems, our approach is to adapt the inference process to minimize the time required to compute the target signal.
We define Reactive Circuits as follows.
\begin{definition}
    A \textbf{Reactive Circuit} $\mathcal{RC}$ is a tuple $(\mathcal{F}, \mathcal{S}, \mathcal{E}, \mathcal{M})$ of nodes $\mathcal{F}$ and $\mathcal{S}$, edges $\mathcal{E}$ and memorized values $\mathcal{M}$.
    Each node $f_i \in \mathcal{F}$ is associated with an AC, each node $s_n \in \mathcal{S}$ is a source signal, and each edge in $\mathcal{E}$ is a directed edge connecting formula and source nodes as a DAG.
    Furthermore, each formula $f_i$ is associated with a memory $m_i$ that stores its output.
\end{definition}%
As shown in Fig.~\ref{fig:safety_rc}, RCs are an extension of Algebraic Circuits, where an original formula is distributed into a DAG of sub-formulas, maintaining a memorized intermediate result for each.
The memory $m_i$ for each node $f_i$ is given by evaluating the formula $f_i$ over its children $C(f_i)$:
\begin{align}
    \label{eq:rc_value}
    m_i(t) = f_i \left( \{ s_j(t) \mid s_j \in C(f_i) \}, \{ m_k \mid m_k \in C(f_i) \} \right).
\end{align}
Hence, the resulting RC represents a multilinear polynomial over the weights of the stable models $\mathcal{J}$.

Beyond probabilistic inference for, e.g., Resin programs, the operations on the RC's DAG we will show in the following are independent of the choice of commutative semiring.
For a more in-depth discussion on applying commutative semirings across a multitude of reasoning tasks, please refer to~\cite{DBLP:journals/japll/KimmigBR17}.

\input{figures/safety_rc}

While our presentation revolves around formulas in Disjunctive Normal Form, i.e., with a single sum over product gates in the case of WMC, they can similarly encapsulate more refined formulas, e.g., optimized for size, without losing generality.
An example program, RC, and internal Algebraic Circuits are illustrated in Fig.~\ref{fig:safety_rc}.

\subsection{Frequencies of Change}
The core mechanism of RCs is their ability to adapt to the rate of change of their inputs. 
To formalize this, we must first define what constitutes a "meaningful" change that warrants a re-computation.
\begin{definition}
    A Predicate of Change $\phi$ is a binary indicator function that determines if a value $v$ constitutes a meaningful update to the value held by signal $s_n$.
\end{definition}%
For example, for a numerical signal, a practical predicate might be to check if the change exceeds a certain threshold, i.e., $\phi(s_n(t), v) = 1$ if $|s_n(t) - v| > \epsilon, \epsilon \in \mathbb{R}_{>0}$. This allows the system to ignore minor sensor noise or insignificant fluctuations. 
The choice of $\phi$ introduces a critical trade-off: 
a lenient predicate (larger $\epsilon$) reduces the computational load but may compromise the exactness of the inference, while a strict predicate (smaller $\epsilon$) ensures higher fidelity at the cost of more frequent updates.

Given a predicate $\phi$, we can now define the rate of meaningful updates.
\begin{definition}
    The Frequency of Change (FoC) $\focn \in \mathbb{R}_{\geq 0}$ of a signal $s_n$ is the expected rate of incoming messages at time $t$ that satisfy the predicate $\phi$. 
    The FoC is measured in Hz and is generally non-stationary, reflecting the environment's dynamics.
\end{definition}%
The FoC is essentially a filtered version of the raw message arrival rate $\lambda_{s_n}(t)$. 
Since $\phi$ only filters messages and never creates new ones, it trivially holds that $\focn \leq \lambda_{s_n}(t)$ at all times. 
In the next section, we discuss how these non-stationary FoCs are estimated online to inform the circuit's adaptation strategy.

\input{figures/adaptation}

\input{figures/operations}

\subsection{Online Circuit Adaptation}
The key to RC efficiency is dynamically adapting the circuit's structure to match the time-varying Frequency of Change (FoC) of its input signals. 
This adaptation process involves two main steps: 
(1) estimating the FoC for each signal online, and (2) restructuring the circuit based on these estimates.

For online FoC estimation, we model the arrival of meaningful updates for each signal as a noisy process. 
We track the time between updates, $\Delta t_{\phi, n} = 1 / \focn$, using a Kalman filter~\cite{MANUAL:kalman1960new}. 
Specifically, we track the state $\mathbf{x} = (\Delta t_{\phi, n}, \dot{\Delta t_{\phi, n}})^T$, representing the inter-arrival time and its rate of change. 
We assume a linear, constant velocity model for state evolution and a direct observation of the inter-arrival time, with system dynamics and measurement models defined as:
\begin{equation*}
    \label{eq:f_and_h}
    \mathbf{F} = \left[ \begin{array}{cc}
         1 & \delta t \\
         0 & 1
    \end{array} \right] \quad \text{and} \quad
    \mathbf{H} = \left[ \begin{array}{cc}
         1 & 0
    \end{array} \right].
\end{equation*}
This standard filter configuration, assuming Gaussian noise, allows us to robustly estimate the underlying FoC for each source signal from the noisy measurements of time between changes provided by a monotonic clock.

With the online FoC estimates available, we partition the signals into discrete frequency bands. 
This is achieved by clustering them based on their FoC values. 
For a given partition width $h \in \mathbb{R}^+$, each signal $s_n$ is assigned to a cluster $c_n(t)$ such that:
$$\forall k \in \mathbb{N}_0 : kh \leq \focn < (k + 1)h \iff c_n(t) = k.$$
The partition width $h$ is a crucial hyper-parameter that, as we show in Sec.~\ref{sec:experiments}, governs the trade-off between circuit size and computational speedup.

As a signal's estimated FoC evolves over time, it may cross a partition boundary, triggering a structural adaptation of the RC. 
This restructuring is performed by the \textit{lift} and \textit{drop} operations (see Algorithms~\ref{alg:lift} and \ref{alg:drop} in Appendix~\ref{app:algorithms}), which move sets of signals and the associated computations between different formula nodes of the RC's DAG, as illustrated in Fig.~\ref{fig:adaptation}. 
Most critically, these operations are guaranteed to preserve the circuit's semantics, i.e., the value of the target signal computed depending on source signal values.

\begin{lemma}
Let $root(\mathcal{RC})(t)$ be the value of the root of the RC's DAG at time $t$.
Then, the circuit adaptations \textit{lift} and \textit{drop} conserve its value, i.e., $\forall s \in \mathcal{S}: root(lift(\mathcal{RC}, \{s\}))(t) = root(drop(\mathcal{RC}, \{s\}))(t) = root(\mathcal{RC})(t)$.
\label{lemma:preservation}
\end{lemma}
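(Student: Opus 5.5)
The plan is to argue by structural induction over the RC's DAG, using only the commutative semiring axioms. The specifics of \textit{lift} and \textit{drop} are in Algorithms~\ref{alg:lift} and~\ref{alg:drop} of Appendix~\ref{app:algorithms}, which I have not yet seen. I will therefore work from the description illustrated in Fig.~\ref{fig:adaptation}: a move of $s$ takes the computations involving $s$ out of one formula node $f_i$ and places them in a different node. \textit{Lift} moves $s$ and its computations up towards the root, into a node that is re-evaluated more often. \textit{Drop} moves them down into a (possibly new) child node whose output is memoized.

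The first step is to recall Eq.~\ref{eq:rc_value}. By recursively substituting each memory $m_k$ with its own formula, the root value at time $t$ equals a single algebraic expression, namely the multilinear polynomial over the source values $\{s_n(t)\}$ mentioned after Eq.~\ref{eq:rc_value}. It therefore suffices to show that each operation changes only the \emph{bracketing} of this polynomial and never the polynomial itself.

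For \textit{drop} of $s$ from a node $f_i$, I write $f_i$ in its sum-of-products form. Because $\oplus$ and $\otimes$ are associative and commutative, the products that do not depend on $s$ can be grouped together. The factors of $s$-independent sub-products can then be pulled into a new child formula $f_k$ by distributivity, e.g.\ $(a\otimes s)\oplus(a\otimes b)=a\otimes(s\oplus b)$, or the reverse grouping, depending on the algorithm. The parent then refers to $m_k$ instead of the extracted subexpression. Substituting $m_k$ by $f_k$ gives back exactly the original $f_i$, so $m_i(t)$ is unchanged.

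\textit{Lift} is the inverse rewriting. It inlines the part of the child formula containing $s$ into the parent and leaves the remainder memoized. The same substitution argument shows $m_i(t)$ is unchanged, and every ancestor of $f_i$ is then unchanged by induction along the DAG. Nodes not on paths through $f_i$ are untouched, which gives $root(lift(\mathcal{RC},\{s\}))(t)=root(drop(\mathcal{RC},\{s\}))(t)=root(\mathcal{RC})(t)$.

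The main obstacle is the freshness of the memoized values, not the algebra. The equality must hold for the values \emph{at time $t$}. So I must check that the newly created or modified nodes have their memories $m_k$ recomputed at creation, and that their parents are re-evaluated whenever a source below them changes, so that $m_k = f_k(\cdot)(t)$ holds consistently with Eq.~\ref{eq:rc_value}. I would first establish the invariant that every memory satisfies Eq.~\ref{eq:rc_value} at all times. Assuming the algorithms recompute $m_k$ when restructuring and propagate updates upward, this invariant reduces the lemma to the purely algebraic identity above.

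A secondary subtlety arises when $s$ appears in a shared subformula with several parents, since the DAG is not necessarily a tree. In that case I would treat each parent separately and apply the same substitution argument to each.
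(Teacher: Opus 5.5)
Your proposal is correct and takes essentially the paper's route. The paper's proof also rests on the distributive identity $a \otimes (b \oplus c) = (a \otimes b) \oplus (a \otimes c)$, reading a \textit{drop} of $a$ as the left-to-right rewrite and a \textit{lift} as its inverse, so the root's value is unchanged.
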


\begin{proof}
The correctness of \textit{lift} and \textit{drop} stems from the distributive property of $\otimes$ over $\oplus$ in the underlying commutative semiring (Definition 3.2). 
For instance, the identity $a \otimes (b \oplus c) = (a \otimes b) \oplus (a \otimes c)$ corresponds to a \textit{drop} of variable $a$ when moving from the left-hand side's structure to the right's, or a \textit{lift} of $a$ when moving from right to left. 
The algorithms apply this principle by restructuring the RC, both by connecting and disconnecting signal nodes in the RC itself, and by appropriately altering the individual AC's associated with the respective formula nodes in accordance with Definition 3.2.
\end{proof}

\subsection{Reactive Circuit Evaluation}
Reactive evaluation avoids redundant processing by re-computing only those parts of the circuit affected by an incoming signal update. 
When a source signal $s_n$ receives a meaningful update (i.e., one that satisfies $\phi$), the memoized values of its dependent nodes are no longer guaranteed to be correct. 
We refer to these nodes as \textit{invalidated}. 
The set of invalidated nodes, denoted $Dep(s_n)$, comprises all formula nodes whose computation directly or indirectly depends on $s_n$ and all their ancestors. 

\begin{lemma}
\label{lemma:dependency}
Let $Q$ be a sorted list of indices of invalidated formulas, i.e., formulas which have a descendant $s_n$ for which a new value satisfying $\phi$ was received.
For any $i, j \in Q$, if formula $f_j$ is an ancestor of formula $f_i$, then $j < i$.
\end{lemma}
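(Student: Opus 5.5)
The claim depends on how the formula nodes are indexed, so the first step is to fix that convention. Since the RC is a DAG (Definition of Reactive Circuit), I will take the indices of the formula nodes $f_i \in \mathcal{F}$ to be those of a topological order, with every parent receiving a smaller index than its children. Such an order exists for every finite DAG. It can be kept after every \textit{lift} and \textit{drop}: these operations only rewire edges between existing formula nodes and source nodes, or add new formula nodes, and by Lemma~\ref{lemma:preservation} the result is again an RC and hence a DAG, so the order can be recomputed or updated after each operation. With this convention, saying that $Q$ is \emph{sorted} means the invalidated indices are listed in increasing order, that is, root-first.

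The core argument is then short. Take $i, j \in Q$ with $f_j$ an ancestor of $f_i$, so there is a directed path $f_j = g_0 \to g_1 \to \dots \to g_\ell = f_i$ with $\ell \geq 1$. I will argue by induction on $\ell$ that the index of $g_0$ is smaller than that of $g_\ell$. In the base case $\ell = 1$, the edge $f_j \to f_i$ gives $j < i$ directly from the topological numbering. In the inductive step, the numbering gives $idx(g_0) < idx(g_1)$, the hypothesis applied to the shorter path gives $idx(g_1) < idx(g_\ell)$, and transitivity finishes it. Acyclicity rules out $i = j$, because that would need a cycle. Membership in $Q$ plays no role in this comparison; it only tells us which indices appear in the list. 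I will also note that $Q$ is closed under ancestors, since $Dep(s_n)$ contains all ancestors of nodes depending on $s_n$. This is not needed for the inequality, but it is what makes evaluating $Q$ front to back meaningful for the evaluation procedure that follows.

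The main obstacle is making sure that the indexing convention is actually the one the algorithms maintain, rather than simply assumed. If the indices are insertion order, a \textit{drop} can create a new child formula with a larger index, which is consistent with the claim. A \textit{lift}, however, might create a new parent formula node that receives a larger index than its children, and that would break the claim. To handle this, I will state explicitly that the indices are taken with respect to a topological order of the current DAG, recomputed whenever the structure changes. As a fallback, the lemma can be read as defining $Q$ to be sorted by topological rank. Either way the induction above goes through unchanged.
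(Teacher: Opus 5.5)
Your argument is correct and is essentially the paper's own proof: the paper likewise rests on the formula indices being assigned by a topological sort of the current DAG, with ancestors numbered lower, and your induction along the path and your point that the numbering must be recomputed after a \textit{lift} just make this explicit. Two asides need fixing, however. First, with this numbering the invalidated formulas must be re-evaluated in \emph{decreasing} index order, children before parents, as the paper states; processing $Q$ front to back (root-first) would recompute each ancestor from stale memories of its children. Second, acyclicity after \textit{lift} and \textit{drop} does not follow from Lemma~\ref{lemma:preservation}, which only concerns the root's value. It is immediate directly, because every added edge either starts at a source node (which has no incoming edges) or touches a newly created formula node that has no outgoing edges (\textit{lift}) or whose only incoming edge comes from a source (\textit{drop}).
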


\begin{proof}
Formula node indices are assigned via a topological sort of the RC's DAG (e.g., using a depth-first traversal). 
Thus, when processing invalidated nodes in decreasing order of their indices, a node is always evaluated after its dependencies have been updated.
\end{proof}

Hence, updating an RC involves maintaining a sorted queue $Q$ containing the indices of all invalidated formulas in $Dep(s_n)$.
For each index $i$ in $Q$, the corresponding formula $f_i$ is re-evaluated using the memoized values of its children, and its own memoized value $m_i$ is updated. 
Each time a formula and its associated memory are updated, the corresponding index is removed from $Q$.
Once the queue is empty, the RC is again in a consistent state. 
This reactive evaluation provides significant computational savings, as formalized below.

\begin{theorem}
\label{theorem:validation}
Let $\omega_i$ be the number of operations to evaluate formula node $f_i$, and $\Omega = \sum_i \omega_i$ be the total operations for a full RC evaluation.
We then obtain the operation rate for a naive evaluation (re-computing the entire RC on any change) $\rho_\text{MAX}(t)$, the rate for a reactive evaluation $\rho_\text{RC}(t)$, and the corresponding efficiency gain $\rho_\text{GAIN}(t)$.
\begin{align*}
    \rho_\text{MAX}(t) &=  \sum\nolimits_n \lambda_{\phi, n}(t) \ \Omega \tag{Full} \\
    \rho_\text{RC}(t) &= \sum\nolimits_n \lambda_{\phi, n}(t) \sum\nolimits_{i \in Dep(s_n)} \omega_i \tag{Reactive} \\
    \rho_\text{GAIN}(t) &= \rho_\text{MAX}(t) / \rho_\text{RC}(t). \tag{Gain}
\end{align*}
\end{theorem}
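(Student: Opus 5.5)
The theorem amounts to an expected-cost computation over the update processes, so I would prove it by linearity of expectation over the per-signal Poisson streams of meaningful updates. Fix a time $t$ and a small window $[t, t+\delta t)$. Each source $s_n$ produces meaningful updates, those that pass $\phi$, as a thinned non-stationary Poisson process with intensity $\lambda_{\phi,n}(t)$. This intensity is bounded by $\lambda_{s_n}(t)$ as noted after the FoC definition. Thinning a Poisson process by a measurable predicate again yields a point process with the stated intensity; we take the FoC to be exactly that intensity. The expected number of meaningful updates of $s_n$ in the window is therefore $\lambda_{\phi,n}(t)\,\delta t + o(\delta t)$. Raw messages failing $\phi$ trigger no recomputation under either strategy, so they contribute no operations.

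Next I would fix the cost incurred per triggering event in each strategy. Under naive evaluation every meaningful update re-evaluates every formula node, at a cost of $\sum_i \omega_i = \Omega$ regardless of $n$. Under reactive evaluation, an update of $s_n$ inserts exactly the indices in $Dep(s_n)$ into $Q$. Each of these is evaluated once, in the order guaranteed by Lemma~\ref{lemma:dependency}, so every node reads already-updated children. No other node is touched, since nodes outside $Dep(s_n)$ have no descendant $s_n$ and their memories $m_i$ remain valid by Eq.~\ref{eq:rc_value}. The cost is therefore $\sum_{i \in Dep(s_n)} \omega_i$. Correctness of the resulting root value, including after any interleaved lift/drop restructurings, follows from Lemma~\ref{lemma:preservation}. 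The two strategies thus compute the same target, and comparing their operation counts is meaningful.

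I would then sum over sources by linearity of expectation. The expected number of operations in the window is $\sum_n \lambda_{\phi,n}(t)\,\delta t\,\Omega$ for naive evaluation and $\sum_n \lambda_{\phi,n}(t)\,\delta t \sum_{i\in Dep(s_n)}\omega_i$ for reactive evaluation, each up to $o(\delta t)$. Dividing by $\delta t$ and letting $\delta t\to 0$ gives the (Full) and (Reactive) rates. The (Gain) line is then a definition, well defined whenever $\rho_\text{RC}(t)>0$. I would also remark that $Dep(s_n)$ is a subset of all formula nodes, so $\rho_\text{GAIN}(t)\ge 1$.

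The main obstacle is the modelling assumption behind counting each update independently. When two updates arrive before $Q$ is drained, or simultaneously, their recomputations could be merged, and the reactive cost would then be at most the stated value rather than equal to it. I would handle this by noting that simultaneous arrivals have probability zero for independent Poisson processes. I would also assume each update is processed to completion before the next arrives, so the formulas are exact expected rates and, without that assumption, upper bounds. A second subtlety is that $Dep(s_n)$ and $\omega_i$ change over time through lift/drop. I would read them as evaluated at the current structure at time $t$; this is consistent because the formulas are instantaneous rates.
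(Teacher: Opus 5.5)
Your proposal is correct and follows essentially the same route as the paper. Correctness comes from Lemma~\ref{lemma:dependency} (evaluation order), completeness of $Dep(s_n)$, and Lemma~\ref{lemma:preservation} across lift/drop; the rates come from per-update costs weighted by $\lambda_{\phi,n}(t)$; and $\rho_\text{GAIN}(t) \geq 1$ follows from $Dep(s_n)$ being a subset of all formula nodes. The one difference is that you justify summing the per-source contributions by linearity of expectation over a shrinking window, whereas the paper cites the superposition principle for independent Poisson processes; your version needs only the intensities, not independence or Poisson-ness of the thinned streams.
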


\begin{proof}
Correctness is established by ensuring that all invalidated nodes are re-evaluated (completeness) and that they are evaluated in an order that respects data dependencies (soundness). 
Lemma~\ref{lemma:dependency} guarantees a sound topological evaluation order. 
Completeness is ensured by construction, as $Dep(s_n)$ includes all nodes affected by a change in $s_n$. 
The semantic preservation of the circuit's function across adaptations is given by Lemma~\ref{lemma:preservation}.
The sum of the frequencies $\lambda_{\phi, n}$ follows from the superposition principle of independent Poisson processes.

To prove the performance gain, we compare the terms for $\rho_\text{RC}(t)$ and $\rho_\text{MAX}(t)$. 
For any signal $s_n$, the set of its dependent nodes $Dep(s_n)$ is a subset of the set of all formula nodes. 
As the operational cost $\omega_i$ is non-negative, the cost of a partial update is bounded by the cost of a full update: 
$$\sum_{i \in Dep(s_n)} \omega_i \leq \sum_j \omega_j = \Omega.$$
Consequently, $\rho_\text{RC}(t) \leq \rho_\text{MAX}(t)$, hence $\rho_\text{GAIN}(t) \geq 1$.
\end{proof} 

\subsection{Example}

Let us consider the case of Weighted Model Counting for exact inference in the following Resin program, i.e., $\otimes$ and $\oplus$ are classic multiplication and summation over values in the $[0, 1]$ interval.

\begin{listing}[h]
    \caption{
        An example Resin program. 
    }%
    \label{listing:example_resin_program}
    \centering
    \begin{minted}
    [
        frame=none,
        autogobble,
        fontsize=\footnotesize,
        xleftmargin=20pt,
        linenos
    ]{python}
    a <- source("/a", Probability).
    b <- source("/b", Probability).
    c <- source("/c", Probability).

    d if a and b and not c.
    d if not a and b and c.
    
    d -> target("/d").
    \end{minted}
\end{listing}%

The Resin program in Listing~\ref{listing:example_resin_program} is compiled into an Answer Set Program (ASP) to find its stable models. 
Source signals `a`, `b`, and `c` are treated as choices, while the target `d` acts as a constraint. 
The resulting ASP is shown in Listing~\ref{listing:example_asp_program}.
\begin{listing}[h]
    \caption{
        The result of compiling Listing~\ref{listing:example_resin_program} for target $d$. 
    }%
    \label{listing:example_asp_program}
    \centering
    \begin{minted}
    [
        frame=none,
        autogobble,
        fontsize=\footnotesize,
        xleftmargin=20pt,
        linenos
    ]{prolog}
    0{a; b; c}3.
    
    d :- not a, b, c.
    d :- a, b, not c.
    
    :- not d.
    \end{minted}
\end{listing}%

Solving this ASP yields two answer sets: $j_1 = \{a, b, \neg c\}$ and $j_2 = \{\neg a, b, c\}$.
The probability of the target is the sum of the probabilities of its models: 
$$P(d) = P(j_1) + P(j_2) = P(a) \cdot P(b) \cdot P(\neg c) + P(\neg a) \cdot P(b) \cdot P(c).$$
At time $t_0$, the RC is set up to compute $P(d)$ according to this formula in node $f_0$.
Assume that at $t_1$, the probability of $a$ changes at a frequency of $5$~Hz and the rest at $1$~Hz.
Meaning, the RC can apply \textit{drop}$(\mathcal{RC}, \{b, c\})$ to facilitate appropriate internal memorization.
\begin{align*}
    t_0:& \qquad P(d) = m_0 \\
    & \qquad m_0 = f_0(a, b, c) = P(a) \cdot P(b) \cdot P(\neg c) + P(\neg a) \cdot P(b) \cdot P(c) \\
    t_1:& \qquad P(d) = m_0 \\
    & \qquad m_0 = f_0(a, m_1, m_2) = P(a) \cdot m_1 + P(\neg a) \cdot m_2 \\
    & \qquad m_1 = f_1(b, c) = P(b) \cdot P(\neg c) \\
    & \qquad m_2 = f_2(b, c) = P(b) \cdot P(c)
\end{align*}%
The original formula had $5$~operations (number of $\cdot$ and $+$) to perform regardless of which input changed.
Instead, after $t_1$, $3$~operations are performed at $5$~Hz when the probability of $a$ changes.
Changes to the probabilities of $b$ and $c$ still require the original $5$~operations, but due to the lower frequency, only at $1$~Hz.

\input{figures/time}

\section{Experiments}
\label{sec:experiments}
Since Resin and RCs are tailored for real-time applications, e.g., ensuring safety constraints for multi-agent systems, our experiments investigate the tracking accuracy, time, and space properties when solving reasoning tasks, such as WMC, continuously. 
Specifically, we answer the following questions through synthetic and high-fidelity simulation experiments:
\begin{itemize}
    \item[] \textbf{(Q1)} Can we accurately track the FoC and obtain a correct source signal partitioning?
    \item[] \textbf{(Q2)} How does partitioning impact reactive computation times and circuit size?
    \item[] \textbf{(Q3)} How does the frequency adaptation of RCs capture dynamic systems?
    \item[] \textbf{(Q4)} What are the gains in computation times for structural adaptation and reactiveness, respectively?
\end{itemize}
To answer these questions, we are employing data distribution based on ROS2 and Fast DDS~\cite{ros2}.
All experiments are run on an Intel i7-9700K desktop CPU and implemented in Rust~\cite{matsakis2014rust}, providing a reliable, thread-safe setup for asynchronous data exchange and reasoning with RCs.
We have aimed to choose optimal compilation settings for both baseline and adapted RC runs.

\subsection{Tracking Frequencies of Change}
To address \textbf{Q1}, we employ a Kalman filter as described in Sec.~\ref{sec:reactive_circuits}.
Fig.~\ref{fig:foc_est} shows how the filter accurately tracks a non-stationary FoC, jumping every $20$ measurements to a new, uniformly distributed frequency.
Furthermore, we present the mean absolute error (MAE) of the partitioning, i.e., the number of bins by which the estimated FoC deviates from its true position.
This demonstrates that the partitioning granularity dictates the resulting MAE: a small bin size corresponds to more mispartitioning, and vice versa.
Sensible filter tuning is required to reduce overshoot and ensure stability.

\subsection{Frequency Partitioning of Reactive Circuits}
We resolve \textbf{Q2} by (i) drawing synthetic FoCs from a set of Gaussian distributions and (ii) breaking down results for a range of partitioning parameters, i.e., the partition width $h$ as described in Sec.~\ref{sec:reactive_circuits}.
To demonstrate RC properties in isolation, we solve a generated WMC problem over $10^5$ models, each with a combination of $10^3$ sources uniformly chosen from a set of $2\times10^3$ overall signals.
Here, we provide ground-truth FoCs and partitioning to the circuit adaptation routine.
Furthermore, the reactive execution scheme is employed, i.e., memorization between signal updates is utilized to reduce the number of operations per second, as stated in $\rho_{RC}$.

Fig.~\ref{fig:rc_adapt} shows the resulting speedup, added memory requirements, and number of layers across settings.
More fine-granular frequency partitions (i) increase the memory footprint and depth of the circuit and (ii) reduce computation times when compared to baseline (non-adapted) performance.
Hence, the adaptations have a direct trade-off between size and performance, with additional memory allocation corresponding to faster inference.

\input{figures/simulation}
\input{listings/flight_conditions}
\input{figures/airsim_time}

\subsection{Rapid and Reactive Safety Assertions in Advanced Air Mobility}
To answer \textbf{Q3}, we employ AirSim~\cite{DBLP:conf/fsr/ShahDLK17} as presented in Fig.~\ref{fig:simulation}, a high-fidelity physics simulation for aerial and ground-based vehicles.
In our setup, seven Unmanned Aircraft Systems (UAS) undertake randomized journeys across a suburban environment.
Each trajectory is obtained via simulated Inertial Navigation Systems, yielding locations over time as 2D Gaussian estimates.
Similarly, the pairwise distance between all drones is computed and shared across the DDS.
The employed Resin (see Listing~\ref{listing:flight}) receives each pairwise \texttt{distance} and defines a target signal \texttt{unsafe}, which is satisfied under any combination of drones flying closer than $25$~m.

Answering \textbf{Q3}, we show that RCs successfully capture and adapt to the application's dynamics. 
Fig.~\ref{fig:leafs} shows the result: as the drones maneuver, the estimated FoCs of the various distance signals fluctuate (visualized in Fig.~\ref{fig:simulation} (c)). 
In response, the RC dynamically redistributes the corresponding signal nodes between two layers (groups of nodes at the same distance from the root), effectively segregating high-frequency (dynamic) signals from low-frequency (quasi-static) ones. 
Even with this simple two-layer partitioning, the experiment clearly demonstrates the circuit's plasticity and its ability to mirror the system's changing temporal characteristics.

We address \textbf{Q4} within the same simulation setup we employ to answer \textbf{Q3}.
The resulting WMC problem of the target signal involves $42$~source signal value streams (each datastream and its negation) and, due to any combination of nearby flying drones constituting an unsafe state, $2^{21} - 1$~models.
Fig.~\ref{fig:airsim_time} presents our ablation study of the runtime for this WMC problem, comparing three approaches.
The 'flat' baseline evaluates the original sum-product problem, representing a naive full re-computation ($\rho_\text{MAX}$).
The 'adapted' curve also performs a full re-computation on each update, but on a circuit that has been structurally optimized via our \textit{lift} and \textit{drop} operations; the speedup here demonstrates the benefit of frequency-guided knowledge compilation alone.
Finally, the 'reactive' curve shows the full potential of RCs, combining structural adaptation with partial re-computation of only the invalidated circuit parts.
This reactive evaluation, corresponding to $\rho_\text{RC}$ from Theorem~\ref{theorem:validation}, yields speedups of several orders of magnitude over the baseline by avoiding redundant computations.

\section{Related Work}
\label{sec:related_work}

\subsection{Neuro-symbolic AI}
The field of logic programming, initiated with the development of Prolog~\cite{DBLP:conf/hopl/ColmerauerR93}, has given rise to powerful paradigms for knowledge representation and reasoning.
Among these, Answer Set Programming (ASP) has become a dominant approach for declarative problem solving, based on the stable model semantics~\cite{DBLP:conf/iclp/GelfondL88}.
In parallel, to address the inherent uncertainties found in real-world applications such as robotics~\cite{DBLP:journals/cacm/Thrun02}, Probabilistic Logic Programming emerged as a key paradigm~\cite{DBLP:series/synthesis/2016Raedt}.
This paradigm, with systems like Bayesian Logic Programs~\cite{DBLP:conf/ilp/KerstingR08} and ProbLog~\cite{DBLP:conf/ijcai/RaedtKT07,DBLP:journals/tplp/FierensBRSGTJR15}, enables probabilistic inference over discrete and later extended to hybrid domains~\cite{DBLP:journals/ml/NittiLR16,DBLP:journals/jair/KumarKR23}.
The integration of deep learning has further expanded this field, leading to neuro-symbolic systems such as DeepProbLog~\cite{DBLP:conf/nips/ManhaeveDKDR18} and SLASH~\cite{DBLP:conf/kr/SkryaginSODK22} that support end-to-end learning by combining neural networks with formal logic.
These approaches are vital for autonomous agents that require safe and reliable decision-making and action planning~\cite{DBLP:conf/aaai/van2010dtproblog,DBLP:conf/ilp/MoldovanORMS11,DBLP:journals/arobots/AntanasMNFKSR19}.

Our work builds upon this foundation by introducing Resin, a probabilistic logic programming language designed for distributed systems with asynchronous data streams, and Reactive Circuits, which provide an adaptive execution model to optimize continual inference.

\subsection{Circuits and Tractable Inference}
A key approach to achieving tractable inference is to compile probabilistic models into computation graphs, often called circuits.
Probabilistic Circuits are a prominent class of such models, offering inference times linear in the circuit size~\cite{choi2020probabilistic,DBLP:conf/iccvw/PoonD11,DBLP:journals/pami/Sanchez-CaucePD22}.
This family includes Arithmetic Circuits, which are widely used for probabilistic inference~\cite{DBLP:journals/jacm/Darwiche03}, and their generalization, Algebraic Circuits, which support a broader range of tasks by operating over any commutative semiring~\cite{DBLP:journals/japll/KimmigBR17,DBLP:conf/ecai/DerkinderenR20}.
The efficiency of these circuits heavily depends on their structure, and significant research has focused on techniques for structural optimizations~\cite{DBLP:journals/jair/darwiche2002knowledge,DBLP:conf/ecai/Darwiche04,DBLP:conf/ai/MuiseMBH12,DBLP:conf/nips/SumerAIM07,DBLP:conf/icml/PeharzLVS00BKG20}.

Our work extends this paradigm by introducing Reactive Circuits (RC) that build on Algebraic Circuits as a highly generalized reasoning foundation.
Instead of creating a single, static circuit, RCs perform a continuous, frequency-guided online knowledge compilation, dynamically restructuring the computation to exploit the heterogeneous update rates of input signals.

\subsection{Reactive and Event-driven Systems} 
Our work draws inspiration from reactive and event-driven systems.
In such systems, e.g., Functional Reactive Programming~\cite{DBLP:conf/pldi/WanH00,DBLP:conf/icfp/ElliottH97}, data flows propagate changes automatically through a computation graph, with updated inputs leading to updated outputs~\cite{DBLP:conf/aosd/SalvaneschiHM14} and learning~\cite{DBLP:journals/automatica/SolowjowT20}.
We extend this paradigm to runtime-efficient weighted first-order logic in cyber-physical systems~\cite{DBLP:journals/tii/KangKS12}, presenting the first syntax and semantics for reactive and asynchronous reasoning.
To this end, our approach shares principles with event-triggered systems that compartmentalize and memoize computations~\cite{DBLP:journals/jossw/BagaevPV23,DBLP:conf/cloud/BhatotiaWRAP11}.

In contrast to searching for a single computation model at the beginning or considering only the original computation graph for minimization, RCs observe the Frequency of Change (FoC) in their inputs to trigger incremental, reversible structural refinements.
This dynamic, event-driven adaptation of the computation graph is what distinguishes RCs from traditional, static models such as Algebraic Circuits and makes them highly practical for continual reasoning.

\section{Conclusion}
\label{sec:conclusion}

In this work, we address the challenge of the high cost of continual inference for autonomous agents in dynamic environments. 
We introduce a novel reactive knowledge representation framework comprising two core contributions: \textbf{Resin}, an asynchronous probabilistic programming language, and \textbf{Reactive Circuits (RCs)}, an adaptive inference structure. 
Resin provides a high-level, declarative language for modeling continuous reasoning over asynchronous data streams, bridging probabilistic logic and reactive programming. 
RCs, in turn, provide an efficient and exact execution model by dynamically restructuring the underlying algebraic computations to mirror the volatility of the input signals.

The efficiency of RCs stems from their ability to partition computations based on the estimated Frequency of Change of input signals. 
By memoizing stable intermediate results and only re-evaluating parts of the model affected by new information, our approach drastically reduces redundant calculations. 
Our experiments, conducted in a high-fidelity drone swarm simulation, show the effectiveness of this reactive evaluation, achieving speedups of several orders of magnitude over traditional, non-adaptive inference. 
This gain highlights the power of aligning computational effort with the environment's temporal dynamics, though it introduces a trade-off between speed and memory footprint.

Future work may focus on enhancing the adaptability and scope of Reactive Circuits. 
For instance, next steps may involve integrating heterogeneous computation models into RC's formula nodes, e.g., depending on the size of the respective sub-formula.
Looking further, one may explore integrating RCs with a wider range of probabilistic and neural models and leverage more active interactions, e.g., by reading frequencies and requesting upstream models at optimized time intervals.
Overall, we believe our work paves the way for a new class of agile and efficient neuro-symbolic systems capable of real-time reasoning in complex, dynamic worlds.

\begin{acks}
Simon Kohaut gratefully acknowledges the financial support from the Honda Research Institute Europe~(HRI-EU).
The Eindhoven University of Technology authors received support from their Department of Mathematics and Computer Science and the Eindhoven Artificial Intelligence Systems Institute.
This work was supported by the Konrad Zuse School of Excellence in Learning and Intelligent Systems (ELIZA).
This work has benefited from the Cluster of Excellence "Reasonable AI" funded by the German Research Foundation (DFG) under Germany’s Excellence Strategy— EXC-3057.
All emojis used in the figures were designed by OpenMoji under the CC BY-SA 4.0 license.
\end{acks}

\printbibliography

\appendix

\clearpage
\section{Resin Grammar}
\label{app:grammar}

In the following, we provide the grammar of the Resin asynchronous probabilistic programming language in Backus-Naur form.

$$
\begin{aligned}
\text{-- Program Structure --} \\
\langle \text{program} \rangle     &::= \langle \text{statement} \rangle \mid \langle \text{statement} \rangle \langle \text{program} \rangle \\
\langle \text{statement} \rangle   &::= \langle \text{source\_decl} \rangle \mid \langle \text{logic\_rule} \rangle \mid \langle \text{target\_decl} \rangle \mid \langle \text{comment} \rangle \\
\\
\text{-- Asynchronous Datastreams --} \\
\langle \text{source\_decl} \rangle &::= \langle \text{id} \rangle \text{ "<- source(" } \langle \text{path} \rangle \text{ ", " } \langle \text{type} \rangle \text{ ")." } \\
\langle \text{target\_decl} \rangle &::= \langle \text{id} \rangle \text{ " -> target(" } \langle \text{path} \rangle \text{ ")." } \\
\langle \text{type} \rangle        &::= \text{ "Probability" } \mid \text{ "Density" } \mid \text{ "Number" } \mid \text{ "Boolean" } \\\\
\text{-- First-Order Logic --} \\
\langle \text{clause} \rangle  &::= \langle \text{head} \rangle \text{ " if " } \langle \text{body} \rangle \text{ "." } 
\\
\langle \text{head} \rangle        &::= \langle \text{atom} \rangle \\
\langle \text{body} \rangle        &::= \langle \text{literal} \rangle \mid \langle \text{body} \rangle \text{ " and " } \langle \text{literal} \rangle \\
\langle \text{literal} \rangle     &::= \langle \text{atom} \rangle \mid \text{ "not " } \langle \text{atom} \rangle \\
\langle \text{atom} \rangle        &::= \langle \text{predicate} \rangle \mid \langle \text{comparison} \rangle \\
\\
\text{-- Expressions \& Terms --} \\
\langle \text{comparison} \rangle  &::= \langle \text{term} \rangle \text{ } \langle \text{rel\_op} \rangle \text{ } \langle \text{term} \rangle \\
\langle \text{predicate} \rangle   &::= \langle \text{id} \rangle \text{ "(" } \langle \text{arg\_list} \rangle \text{ ")" } \mid \langle \text{id} \rangle \\
\langle \text{arg\_list} \rangle   &::= \langle \text{term} \rangle \mid \langle \text{term} \rangle \text{ ", " } \langle \text{arg\_list} \rangle \\
\langle \text{term} \rangle        &::= \langle \text{variable} \rangle \mid \langle \text{id} \rangle \mid \langle \text{number} \rangle \mid \langle \text{predicate} \rangle \\
\\
\text{-- Tokens --} \\
\langle \text{rel\_op} \rangle     &::= \text{ ">" } \mid \text{ "<" } \mid \text{ "==" } \mid \text{ ">=" } \mid \text{ "<=" } \\
\langle \text{variable} \rangle    &::= [A-Z][a-zA-Z0-9\_]* \\
\langle \text{id} \rangle          &::= [a-z][a-zA-Z0-9\_]* \\
\langle \text{path} \rangle        &::= \text{ "\"" } [a-zA-Z0-9\_/]* \text{ "\"" } \\
\langle \text{comment} \rangle     &::= \text{ "\# " } \text{ (any characters) }
\end{aligned}
$$

\clearpage
\section{Reactive Circuit Algorithms}
\label{app:algorithms}

In the following, we provide the \textit{lift} and \textit{drop} algorithms employed for online adaptation of Reactive Circuits.

\input{algorithms/lift}
\input{algorithms/drop}
\end{document}

%% file: header.tex
\usepackage{url}
\usepackage{amsmath,amsfonts}
\usepackage{graphicx}
\usepackage{textcomp}
\usepackage{xcolor}
\usepackage{epstopdf}
\usepackage{upgreek}
\usepackage{comment}
\usepackage{graphicx}
\usepackage{tikz}
\usepackage{lipsum}
\usepackage{subcaption}
\usepackage{svg}
\usepackage{stmaryrd}
\usepackage{algorithm}
\usepackage{algorithmicx}
\usepackage{algpseudocode}
\usepackage[frozencache,cachedir=.]{minted}
\usepackage{newfloat}
\usepackage{listings}
\DeclareCaptionStyle{ruled}{labelfont=normalfont,labelsep=colon,strut=off} 
\floatstyle{ruled}
\newfloat{listing}{tb}{lst}{}
\floatname{listing}{Listing}

\let\oldhat\hat

\renewcommand{\hat}[1]{\oldhat{\mathbf{#1}}}

\newcommand{\focn}{\lambda_{\phi, n}(t)}

%% file: figures/architecture.tex
\begin{figure*}
    \centering
    \includegraphics[width=\linewidth]{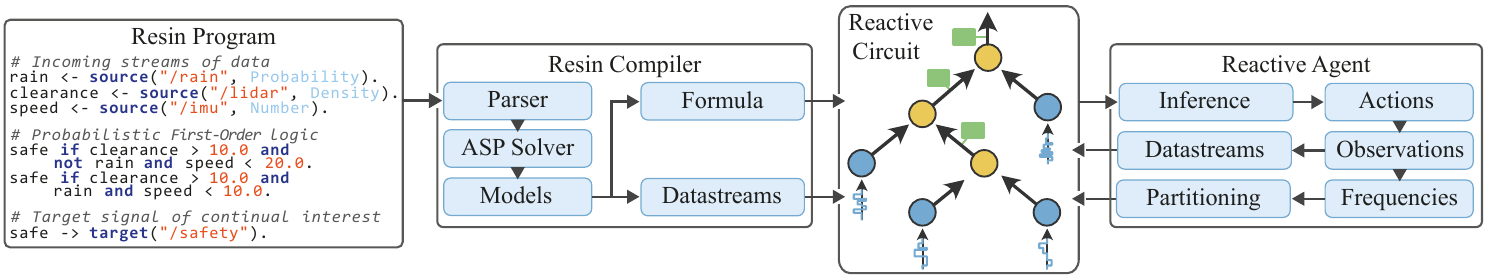}
    \caption{
        \textbf{The Asynchronous Reasoning architecture:}
        For each target of a Resin program, the compiler parses the program into a constrained Answer Set Program and generates the respective stable models.
        This causes an initial Reactive Circuit (RC) to be created and connected to datastreams, which is then, over time, adapted to reflect the volatility of its inputs.
    }
    \label{fig:architecture}%
\end{figure*}%

%% file: figures/safety_rc.tex
\begin{figure}[t]
    \begin{subfigure}{0.38\textwidth}
        \begin{minted}
            [
                frame=none,
                autogobble,
                fontsize=\footnotesize,
                xleftmargin=20pt,
                linenos
            ]{python}
            # Incoming streams of data
            rain <- source("/rain", Probability).
            clearance <- source("/lidar", Density).
            speed <- source("/imu", Number).
            
            # Probabilistic logic
            safe if clearance > 10.0 and
                not rain and speed < 20.0.
            safe if clearance > 10.0 and
                rain and speed < 10.0.
            
            # Target signal of continual interest
            safe -> target("/safety").
        \end{minted}
    \caption{
        A Resin program
    }%
    \end{subfigure}
    \hfill
    \begin{subfigure}{0.2\textwidth}
        \includegraphics[width=\textwidth]{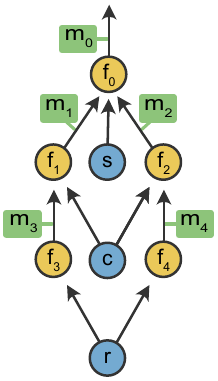}
        \centering
        \caption{The Reactive Circuit}
    \end{subfigure}
    \hfill
    \begin{subfigure}{0.3\textwidth}
        \includegraphics[width=\textwidth]{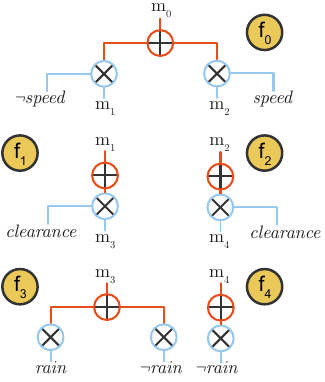}
        \centering
        \caption{The Algebraic Circuits}
    \end{subfigure}
    \caption{
        \textbf{Reactive Circuits enable Asynchronous Reasoning:}
        After compiling a Resin program (a) into an initial Reactive Circuit, its structure is adapted over time to facilitate an optimized inference scheme (b) by dividing the original formula into memorized sub-formulas according to the volatility of the respective input signals (c). 
        Here, the $speed$, $clearance$, and $rain$ signals are assumed to have a descending volatility, thereby being assigned to separate depths of the RC's DAG.
    }
    \label{fig:safety_rc}
\end{figure}%

%% file: figures/adaptation.tex
\definecolor{custom_green}{HTML}{8EC47A}
\definecolor{custom_yellow}{HTML}{E0C050}
\definecolor{custom_blue}{HTML}{74AAD1}
\begin{figure}[t]
    \centering
    \includegraphics[width=0.85\linewidth]{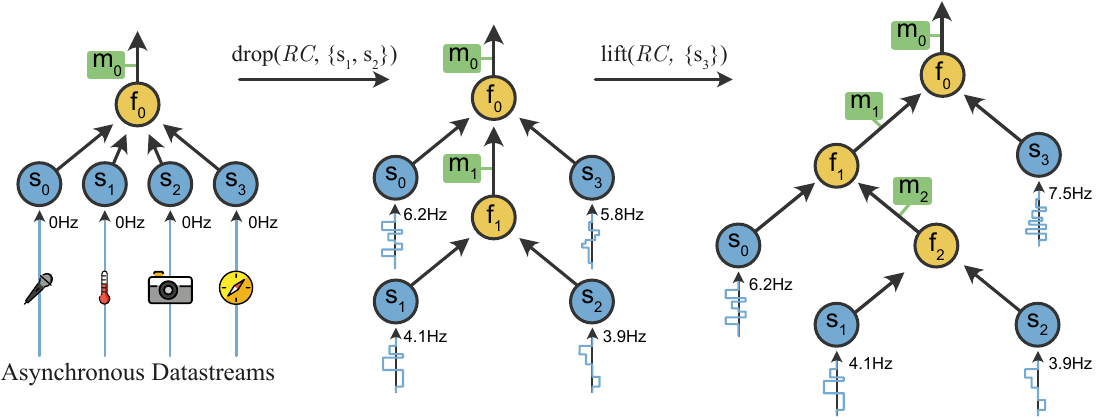}
    \caption{
    \textbf{Asynchronous Inference in Resin via Reactive Circuits:}
        For each inference target of a Resin program, a Reactive Circuit (RC) is initialized.
        Over time, the RC's structure is adapted to facilitate memorized intermediate results (\textcolor{custom_green}{green}) and offload sub-formulas of the sum-product problem (\textcolor{custom_yellow}{yellow}), enabling efficient updates based on a partitioning of the frequency of new values arriving through the source nodes (\textcolor{custom_blue}{blue}).
    }
    \label{fig:adaptation}%
\end{figure}%

%% file: figures/operations.tex

%% file: figures/time.tex
\begin{figure}[t!]
    \centering
    \begin{subfigure}{0.49\textwidth}
        \centering
        \includegraphics[width=\linewidth]{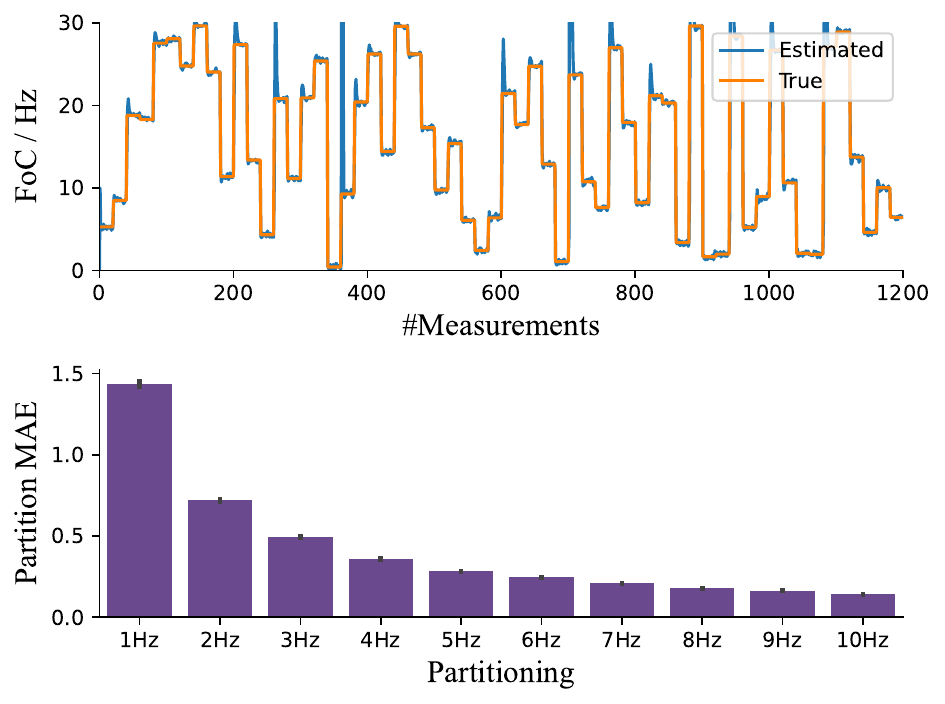}
        \caption{FoC Estimation}
        \label{fig:foc_est}
    \end{subfigure}
    \hfill
    \begin{subfigure}{0.49\textwidth}
        \centering
        \includegraphics[width=\linewidth]{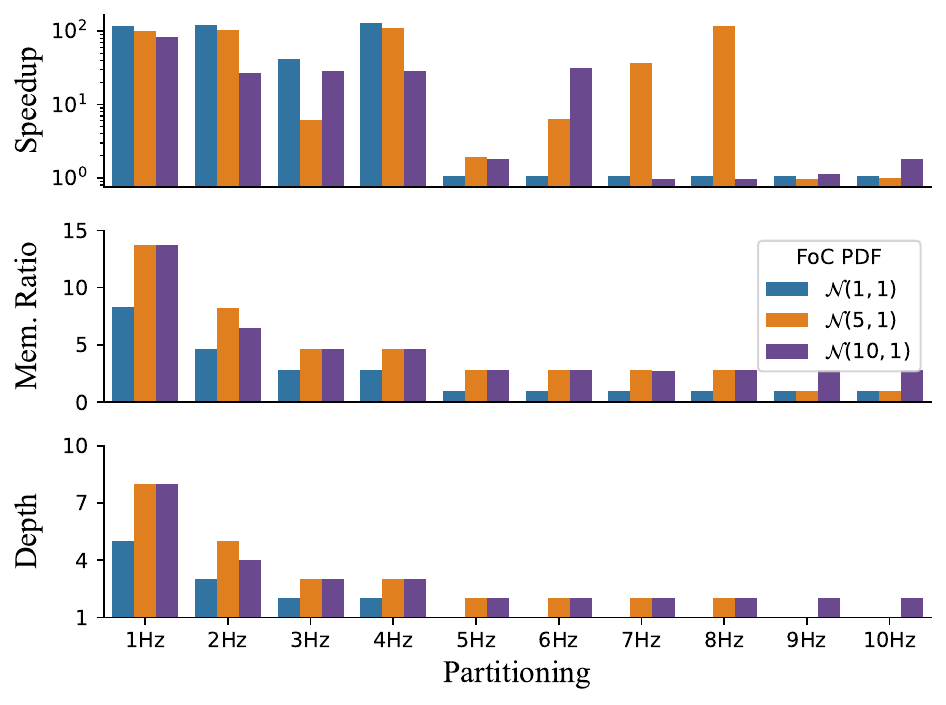}
        \caption{Reactive Circuit Adaptation}
        \label{fig:rc_adapt}
    \end{subfigure}
    
    \caption{
        \textbf{Simulation study of FoC estimation and time-dynamic Reactive Circuit properties.} 
        Evaluation on asynchronous synthetic data reveals that, while the filter can track time-variant frequencies (sampled from $U(0, 30)$ every $20$ measurements), the clustering accuracy is sensitive to the granularity of the partitioning (a). 
        As the partition size varies, the adapted RC exhibits a direct link between depth and memory footprint, whereas speedup displays non-monotonic behavior relative to a static baseline (b). 
        Notably, the partition Mean Absolute Error (MAE) is governed by partitioning granularity, and adaptations cease entirely when a single partition encompasses all signals.
    }
    \label{fig:combined_simulation}
\end{figure}

%% file: figures/simulation.tex
\begin{figure}
    \centering
    \begin{subfigure}{0.24\linewidth}
        \centering
        \includegraphics[width=\textwidth]{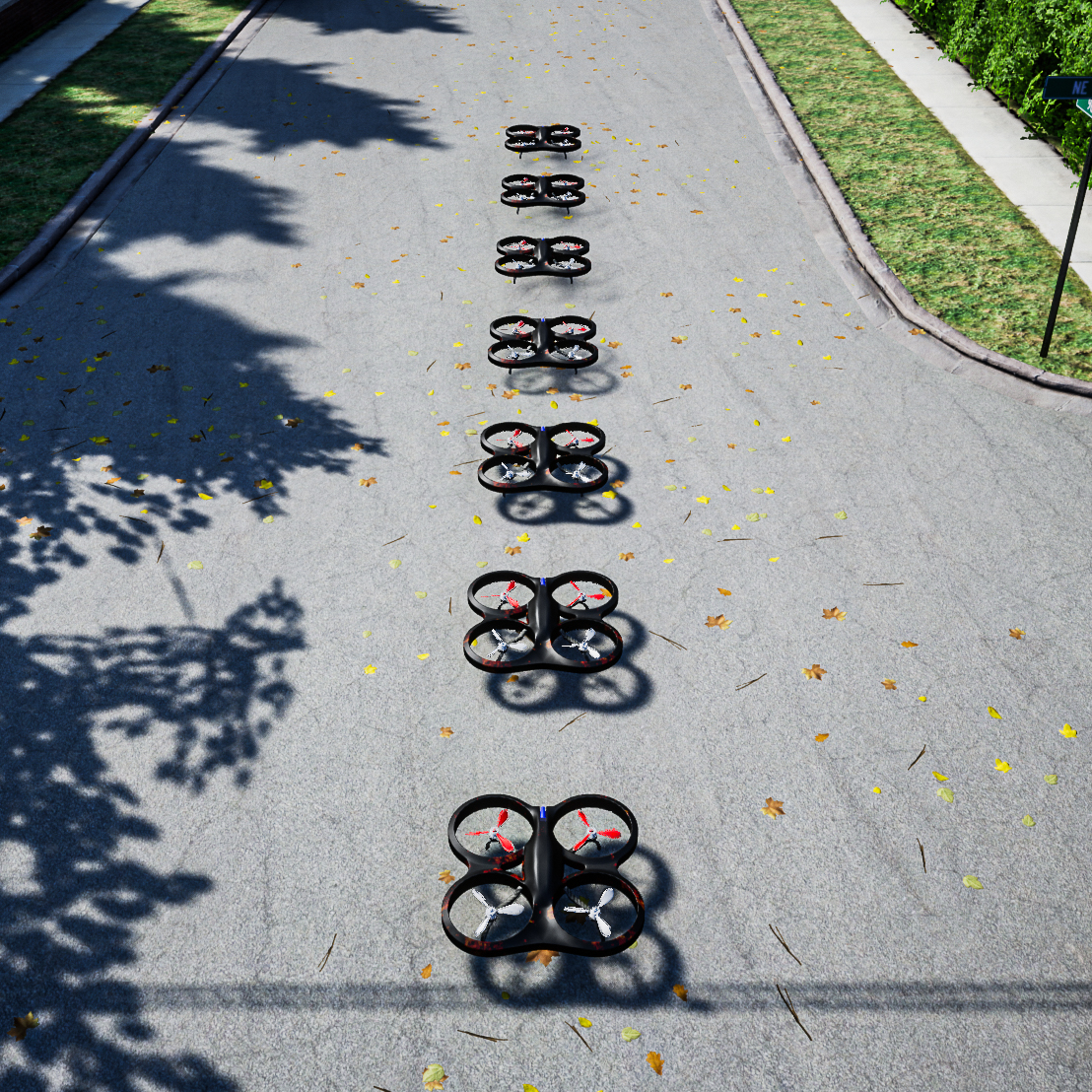}
        \caption{Parking}
    \end{subfigure}
    \hfill
    \begin{subfigure}{0.24\linewidth}
        \centering
        \includegraphics[width=\textwidth]{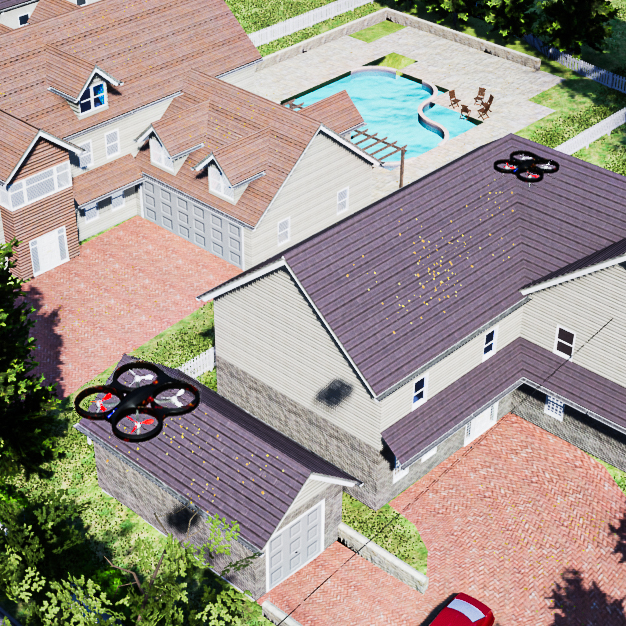}
        \caption{Travel}
    \end{subfigure}
    \hfill
    \begin{subfigure}{0.48\linewidth}
        \includegraphics[width=\textwidth]{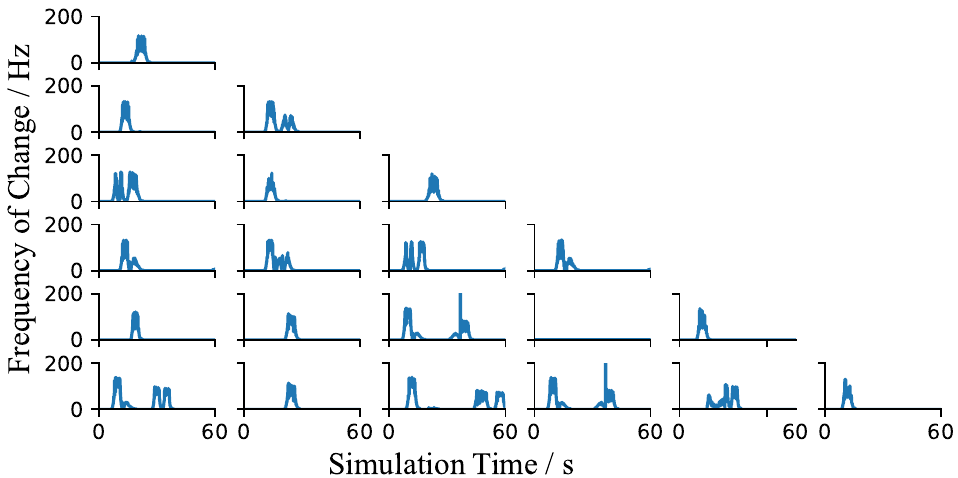}
        \caption{Frequencies}
    \end{subfigure}
    \caption{
        \textbf{Simulated Unmanned Aircraft System traffic over suburban environment:}
        Randomized journeys are triggered for each UAS, leading them to depart from and return to the common parking area (a) and to repeatedly enter each other's airspaces~(b).
        Meanwhile, the Frequencies of Change of the pairwise distances are tracked (c) via individual Kalman filter processes, showing how the volatility of the signals is time-dynamic, peaking when random journeys lead to similar areas or when returning to the parking zone.
    }
    \label{fig:simulation}
\end{figure}

%% file: listings/flight_conditions.tex
\begin{listing}[t]
\caption{
    \textbf{Continual safety assertion in Resin:}
    By estimating pairwise distances between individual UASs as illustrated in Fig.~\ref{fig:simulation}, a Resin program can continually assert the safety within the drone network.
}%
\label{listing:flight}%
    \centering
    \begin{minted}
    [
        frame=none,
        autogobble,
        fontsize=\footnotesize,
        xleftmargin=20pt,
        linenos
    ]{python}
    # Estimated pairwise distances
    distance(drone_1, drone_2) <- source(
        "/drone1_drone2", Density
    ).
    distance(drone_1, drone_3) <- source(
        "/drone1_drone3", Density
    ).
    ...
    
    # The situation becomes unsafe if drones enter each other's airspace,
    # i.e., closer than 25 meters
    unsafe if distance(X, Y) < 25. 

    # We continually provide the `unsafe` atom as target signal via a single RC
    unsafe -> target("/safety").
    \end{minted}
\end{listing}%

%% file: figures/airsim_time.tex
\begin{figure}[t!]
    \centering
    \begin{subfigure}{0.48\textwidth}
        \centering
        \includegraphics[width=\linewidth]{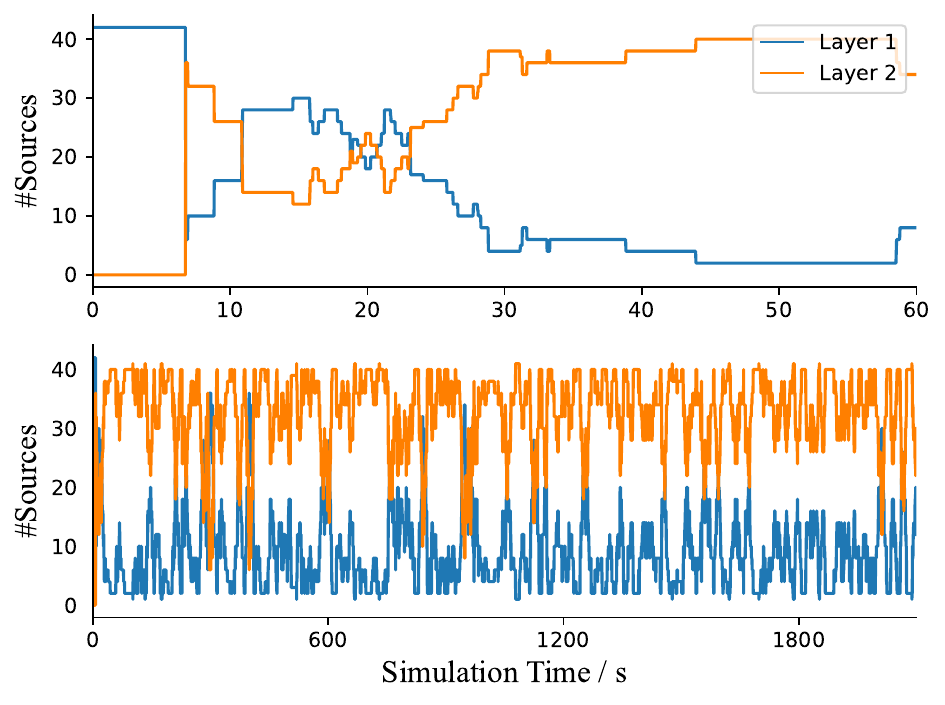}
        \caption{Frequency adaptation over UAS traffic} 
        \label{fig:leafs}
    \end{subfigure}
    \hfill
    \begin{subfigure}{0.48\textwidth}
        \centering
        \includegraphics[width=\linewidth]{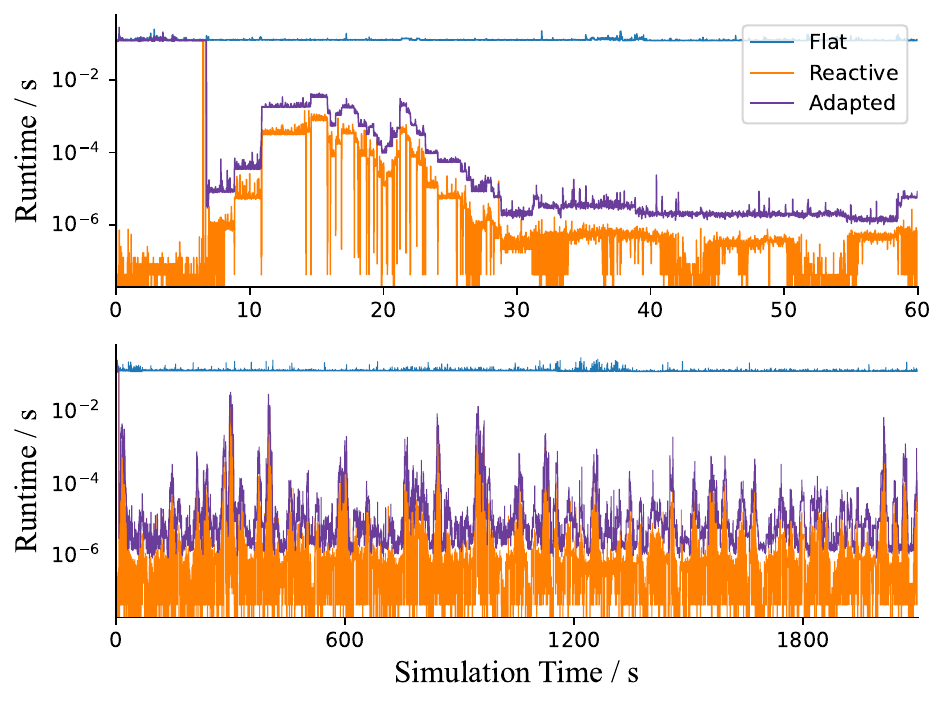}
        \caption{Reactive safety assertions} 
        \label{fig:airsim_time}
    \end{subfigure}
    
    \caption{
        \textbf{Ablation study of frequency adaptation and performance in UAS traffic scenarios.} 
        As a drone swarm performs randomized journeys, the RC redistributes sources based on the estimated FoCs. 
        A two-layer partitioning effectively captures the distinction between static and dynamic sources (a), where the signal count in the first layer directly correlates with the system runtime. 
        This adaptive partitioning facilitates significant performance gains (b); while the adapted circuit achieves high speedups through knowledge compilation alone, the reactive execution via memorization of frequency-separated formula slices yields multiple orders of magnitude in speedup, as shown on the logarithmic scale.
    }
    \label{fig:airsim_ablation}
\end{figure}

%% file: algorithms/lift.tex
\begin{algorithm}
\caption{\textbf{Lift} applied to Reactive Circuit $\mathcal{RC}$ and set of signals $\mathcal{S}_l$}
\label{alg:lift}
\begin{algorithmic}[1]
\For {$s \in$ $\mathcal{S}_l$} \Comment{For each signal to lift}
    \For {$f_i \in \mathcal{F}_\mathcal{RC}$} \Comment{For each formula in the RC}
        \If {$s \in children(f_i)$} \Comment{Move dependencies on $s$ to parent nodes}
            \If {$parents(f_i)$ is empty} \Comment{Create a new parent if none exists}
                \State Add parent node $f_j$ with formula $f_j(s, m_i) = s \cdot m_i$ to $\mathcal{F}_\mathcal{RC}$
                \State Add edges $(s, f_j)$ and $(f_i, f_j)$ to $\mathcal{E}_\mathcal{RC}$
                \State $m_j \leftarrow m_i$
            \Else
                \For {$f_j \in parents(f_i)$} \Comment{Else, multiply all parents with $s$}
                    \State Add edge $(s, f_j)$ to $\mathcal{E}_\mathcal{RC}$ and multiply $m_i$ in formula $f_j$ with $s$
                \EndFor
            \EndIf
            \State Remove edge $(s, f_i)$ from $\mathcal{E}_\mathcal{RC}$ and $s$ from formula $f_i$
            \State $m_i \leftarrow m_i \mathbin{/} s$
        \EndIf
    \EndFor
\EndFor
\end{algorithmic}
\end{algorithm}

%% file: algorithms/drop.tex
\begin{algorithm}
\caption{\textbf{Drop} applied on Reactive Circuit $\mathcal{RC}$ and set of signals $\mathcal{S}_d$}
\label{alg:drop}
\begin{algorithmic}[1]
\For {$s \in$ $\mathcal{S}_d$} \Comment{For each signal to drop}
    \For {$f_i \in \mathcal{F}_\mathcal{RC}$} \Comment{For each formula in the RC}
        \If {$s \in children(f_i)$} \Comment{Move dependencies on $s$ to child nodes}
            \If {$children(f_i) \setminus s$ is empty} \Comment{Create a new child if none exists}
                \For {each $\otimes$ node in $f_i$ that connects to $s$}
                    \State Add child node $f_j$ with formula $f_j(s) = s$ to $\mathcal{F}_\mathcal{RC}$
                    \State Add edges $(s, f_j)$ and $(f_j, f_i)$ to $\mathcal{E}_\mathcal{RC}$
                    \State $m_j \leftarrow s$
                    \State Substitute respective $s$ in formula $f_i$ with $m_j$
                \EndFor
            \Else
                \For {$f_j \in children(f_i)$} \Comment{Else, multiply all child formulas with $s$}
                    \State Add edge $(s, f_j)$ to $\mathcal{E}_\mathcal{RC}$ and multiply formula $f_i$ with $s$
                \EndFor
            \EndIf
        \State Remove edge $(s, f_i)$ from $\mathcal{E}_\mathcal{RC}$ and $s$ from formula $f_i$
        \EndIf
    \EndFor
\EndFor
\end{algorithmic}
\end{algorithm}